\documentclass[letterpaper, 10 pt, conference]{ieeeconf}  

\IEEEoverridecommandlockouts                              

\usepackage{graphics} 
\usepackage{epsfig} 
\usepackage{mathptmx} 
\usepackage{times} 
\usepackage{amsmath} 
\usepackage{amssymb}  

\usepackage{url}

\usepackage{color}
\usepackage{cite}

\usepackage{calrsfs}
\DeclareMathAlphabet{\pazocal}{OMS}{zplm}{m}{n}

\UseRawInputEncoding
\newtheorem{theorem}{Theorem}[section]

\newtheorem{prop}{Proposition}[section]
\newtheorem{assumption}{Assumption}[section]

\title{\LARGE \bf
Distributionally Robust and Safe Imitation Learning*
}

\author{Ahmed Aboudonia$^{1}$ and Naira Hovakimyan$^{2}$
\thanks{*This work was supported by the Swiss National Science Foundation under grant agreement P500PT$\_$230223, the Air Force Office of Scientific Research Grant (AFOSR) Grant AF FA9550-25-1-0274, the National Aeronautics and Space Administration (NASA) under Grant 80NSSC22M0070, and the National Science Foundation (NSF) under Grants CMMI 2135925, CPS 2311085 and IIS 2331878.}
\thanks{$^{1}$Ahmed Aboudonia is with the University of California Berkeley, Berkeley, CA 94720, USA {\tt\small aboudonia@berkeley.edu}}%
\thanks{$^{2}$Naira Hovakimyan is with the University of Illinois Urbana-Champaign, Urbana, IL 61801, USA {\tt\small nhovakim@illinois.edu}}%
}

\begin{document}

\maketitle
\thispagestyle{empty}
\pagestyle{empty}

\begin{abstract}

Imitation learning (IL) has achieved remarkable success in complex decision-making tasks. 
However, its performance is highly sensitive to distribution shifts, which can pose significant safety risks. 
We propose a distributionally robust and safe IL framework that explicitly addresses both policy-induced and uncertainty-induced distribution shifts.
Our approach develops a unified framework leveraging Taylor Series Imitation Learning (TaSIL) to mitigate policy-induced shifts and distributionally robust adaptive control to handle uncertainty-induced shifts.
This architecture enables the formulation of an IL problem that optimizes performance under distributional uncertainty while systematically accounting for safety constraints. We demonstrate the effectiveness of the proposed approach on an unmanned aerial vehicle (UAV) case study where the UAV performs a task in an uncertain environment while avoiding unsafe regions.

\end{abstract}

\section{INTRODUCTION}

Learning-from-demonstrations has emerged as a compelling paradigm for enabling autonomous systems to acquire control policies from expert trajectories~\cite{ravichandar2020recent}. 
In this context, imitation learning (IL) has shown significant success in learning policies that reproduce expert behavior~\cite{zare2024survey}.
IL has been successfully applied in domains such as robotic manipulation~\cite{fang2019survey}, autonomous driving~\cite{le2022survey}, and legged robots~\cite{mirza2025imitation}.

Behavioral cloning (BC) is considered the basic IL approach, in which expert demonstrations are collected as state-action pairs~\cite{zare2024survey}.
A supervised learning problem is then solved to learn a mapping from states to actions.
To reduce the need for expert action labeling, imitation from observation has emerged as a promising approach that enables learning from state-only demonstrations \cite{torabi2019recent}.

While IL has shown promising results, it is limited by the intrinsic difficulty of distribution shift. 
One source of this shift is policy-induced distribution shift, where small deviations in the learned policy can accumulate over time, potentially causing cascading errors \cite{zare2024survey,belkhale2023data}. 
Another source is uncertainty-induced distribution shift, in which minor variations in the environment, modeled as epistemic and aleatoric uncertainties, can similarly propagate through the system, leading to degraded performance \cite{tagliabue2024efficient,gahlawat2026distributionally}.
As a result, both policy-induced and uncertainty-induced distribution shifts can propagate errors that pose significant safety risks.

To mitigate the effects of policy-induced distribution shifts, several IL variants have been developed. 
For example, approaches such as DAgger (Dataset Aggregation) \cite{ross2011reduction} and DART (Disturbances for Augmenting robot trajectories) \cite{laskey2017dart} build upon BC and use supervised learning. 
Other methods, such as inverse reinforcement learning \cite{arora2021survey} and generative adversarial imitation learning \cite{ho2016generative}, follow different principles and have also demonstrated robustness in mitigating distribution shifts \cite{zare2024survey}. 
More recently, Taylor series imitation learning (TaSIL) has been proposed to align expert and imitated policies along with their higher-order derivatives \cite{pfrommer2022tasil}.

To address uncertainty-induced distribution shifts, several approaches have been proposed. 
Robust imitation learning methods, for instance, have been developed to learn from multiple experts acting under diverse environment dynamics \cite{chae2022robust}. 
Sampling augmentation methods leverage tube-based robust MPC schemes within imitation learning frameworks such as BC and DAgger \cite{tagliabue2024efficient}. 
Distributionally robust imitation learning approaches build upon BC and employ distributionally robust optimization within a Robust Markov Decision Process framework \cite{panaganti2023distributionally}.
Recently, a layered control architecture, termed DRIP, has been proposed \cite{gahlawat2026distributionally}.
It integrates TaSIL to address the policy-induced imitation gap and $\pazocal{L}_1$-Distributionally Robust Adaptive Control ($\pazocal{L}_1$-DRAC) to mitigate the uncertainty-induced imitation gap \cite{gahlawat2025mathcal}. 

In this paper, we propose a distributionally robust and safe IL framework that adopts a similar architecture to that of DRIP, where policy-induced and uncertainty-induced distribution shifts are handled using TaSIL and $\pazocal{L}_1$-DRAC, respectively.
Unlike the standard empirical risk minimization over expert trajectories typically used in TaSIL, we reformulate the TaSIL problem to explicitly account for distributional robustness and safety requirements during training, inspired by the robustness certificates provided by $\pazocal{L}_1$-DRAC. 

To this end, we first generalize the TaSIL objective to a distributionally robust formulation that optimizes performance over an ambiguity set of state distributions, thereby accounting explicitly for distribution shifts.
Additionally, we incorporate a safety term into the TaSIL objective, ensuring that the resulting policy accounts for safety requirements.
The resulting framework is designed to be robust to both policy-induced and uncertainty-induced distribution shifts, optimize performance under distributional uncertainty, and account for safety considerations.
Finally, we demonstrate the efficacy of the proposed framework through extensive simulations on an unmanned aerial vehicle (UAV) operating in uncertain environments while avoiding unsafe regions.

The remainder of this paper is organized as follows.
Section II presents the problem setup and provides a brief overview of IL, with a focus on TaSIL.
Section III introduces our proposed distributionally robust and safe IL framework. Specifically, we discuss $\pazocal{L}_1$-DRAC and demonstrate how it can be leveraged to reformulate a distributionally robust TaSIL problem, as well as to account for safety requirements.
Section IV evaluates the performance of the proposed framework through extensive simulations on a UAV case study. Concluding remarks are provided in Section V.


\section{Problem Setup}

We consider the uncertain dynamical system whose dynamics are given by the It$\text{\^o}$ process:
\begin{equation}
    \label{eq:ito}
    d\bar{X}_{t}
    = \left[A_\mu \bar{X}_{t} + B_c\left(\bar{U}_{t} + H_\mu(\bar{X}_{t}) \right) \right]dt
    + \left[A_\sigma  + B_c H_\sigma(\bar{X}_{t})  \right]dW_{t}, 
\end{equation}
where $\bar{X}_{t} \in \mathbb{R}^n$ and $\bar{U}_{t} \in \mathbb{R}^m$ are the system state and control input vectors, respectively, and $W_{t} \in \mathbb{R}^{n_w}$ is a Brownian motion defined on the complete probability space $\left(\Omega,\pazocal{F},\mathbb{P}\right)$.
The matrices $A_\mu \in \mathbb{R}^{n \times n}$, $A_\sigma \in \mathbb{R}^{n \times n_w}$ and $B_c \in \mathbb{R}^{n \times m}$ encode the known parts of the system dynamics. 
The functions $H_\mu: \mathbb{R}^n \rightarrow \mathbb{R}^m$ and $H_\sigma: \mathbb{R}^n \rightarrow \mathbb{R}^{m \times n_w}$ are continuously differentiable maps describing the drift and diffusion uncertainties, respectively.
Throughout the sequel, we use capital letters (e.g., $X_t$, $U_t$) to denote continuous-time variables, and lowercase letters (e.g., $x_k$, $u_k$) to denote their discrete-time counterparts. 
We refer to $H_\mu(\bar{X}_t)$ and $H_\sigma(\bar{X}_t)$ as epistemic uncertainties, and to $dW_t$ as aleatoric uncertainty.

For completeness, we introduce the following two definitions.
A continuous function $\gamma(x)$ is of class $\pazocal{K}$ if it is strictly increasing in $x$, and satisfies $\gamma(0) = 0$.
A continuous function $\beta(x,t)$ is of class $\pazocal{KL}$ if, for each fixed $t$, $\beta(x,t)$ is of class $\pazocal{K}$, and for each fixed $x$, $\beta(x,t)$ is decreasing with respect to $t$ and satisfies $\lim_{t \rightarrow \infty} \beta(x,t) = 0$.

\subsection{Imitation Learning}

We consider $n$ rollouts of length $L_d$ generated by an expert policy under nominal conditions. For a sufficiently small sampling time $\Delta T$, these rollouts can be interpreted as state-input trajectories of the discretized nominal deterministic dynamics corresponding to \eqref{eq:ito}, given by:
\begin{equation}
    \label{eq:simulator}
    x_{k+1} = A x_k + B u_k, 
\end{equation}
where $k \in \mathbb{N}_0$ denotes the discrete time index, $x_k \in \mathbb{R}^n$ is the state vector, $u_k \in \mathbb{R}^m$ is the input vector, $I_n$ denotes an identity matrix of size $n$,
$A =  I_n + A_\mu \Delta T$ and $B = B_c \Delta T$ define the discrete-time system matrices.
Each rollout is initialized from an independently and identically distributed initial condition $\xi$ drawn from a distribution $\pazocal{D}$ with a compact support $\pazocal{X}$.

Given a policy $\pi(x_k)$, we denote the solution of \eqref{eq:simulator} at time index $k$ under the input $u_k = \pi(x_k)$  starting from $\xi$ by $x_k(\xi,\pi)$. 
We denote the expert policy by $\pi_*(x_k)$, and
the discrepancy between any policy $\pi(x_k)$ and the expert policy $\pi_*(x_k)$ on a trajectory generated by the latter starting from $\xi$ by $\Delta_k^{\pi_*}(\xi;\pi) = \pi(x_k(\xi,\pi_*)) - \pi_*(x_k(\xi,\pi_*))$. 

The main objective of imitation learning (IL) is to learn a policy that induces a closed-loop behavior similar to that of the expert policy, as quantified by the expected imitation gap, $\mathbb{E}_\xi \operatorname{max}_{1 \leq k \leq L_d} \| x_k(\xi,\pi) - x_k(\xi,\pi_*)\|$.
The standard IL approach, known as BC, achieves the aforementioned objective by formulating a supervised learning problem that minimizes the empirical risk as follows:
\begin{equation*}
    \hat{\pi} \in \operatorname{argmin}_{\pi \in \Pi} \sum\nolimits_{i=1}^n h(\{\Delta_k^{\pi_*}(\xi_i;\pi)\}_{k=0}^{L_d}),
\end{equation*}
where $\Pi$ is a preselected policy class, and $h(\cdot)$ is a loss function designed to penalize the discrepancies $\Delta_k^{\pi_*}(\xi_i;\pi)$ along the expert trajectories.

\subsection{Taylor Series Imitation Learning}

Despite the simplicity of BC, the learned policy $\hat{\pi}(x_k)$ can experience distribution shift due to compounding errors. 
To this end, TaSIL replaces the standard BC loss function with
\begin{equation}
    \label{eq:loss}
    l_p^{\pi_*}(\xi,\pi) = \sum\nolimits_{j=0}^{p} \operatorname{max}_{0 \leq k \leq L_d-1} \|\partial_x^j\Delta_k^{\pi_*}(\xi,\pi) \|,
\end{equation}
where $\partial_x^j$ denotes the $j$-th order partial derivative with respect to $x$ and $p$ refers to the highest order derivative considered in the loss function. Note that any surrogate loss that upper bounds the supremum loss~\eqref{eq:loss} can be used. Thus, the empirical risk minimization problem becomes
\begin{equation*}
    \hat{\pi}_p \in \operatorname{argmin}_{\pi \in \Pi} \sum\nolimits_{i=1}^n l_p^{\pi_*} (\xi_i, \pi).
\end{equation*}
The main idea behind TaSIL \cite{pfrommer2022tasil} is to minimize not only the discrepancy between the learned and expert policies, but also between their higher-order derivatives up to order $p$, thereby mitigating policy-induced distribution shifts as follows.

Given a policy $\pi(x_k)$ and an additive input perturbation $\Delta_k$, let the perturbed closed-loop dynamics of the nominal determinstic system~\eqref{eq:simulator} be:
\begin{equation}
    \label{eq:perturbed_system}
    x_{k+1} = A x_k + B (\pi(x_k)+\Delta_k),
\end{equation} 
and its solution at time index $k$ be denoted by $x_k^\pi(\xi,\{\Delta_{s}\}_{s=0}^{k-1})$. 

\begin{assumption}
    \label{ass:ISS}
    The perturbed closed-loop dynamics \eqref{eq:perturbed_system} under the expert policy $\pi_*(x_k)$ is $\eta$-locally $\delta$-input-to-state (ISS) stable, that is, for all $\Delta_k$ satisfying $\operatorname{sup}_{k \in \mathbb{N}_0} \|\Delta_k\| \leq \eta$, there exists a class $\pazocal{KL}$ function $\beta(\cdot,\cdot)$ and a class $\pazocal{K}$ function $\gamma(\cdot)$ such that for all initial conditions $\xi_1$, $\xi_2 \in \pazocal{X}$, perturbation sequences $\{\Delta_k\}_{k \geq 0}$ and $k \in \mathbb{N}$, we have
    \begin{multline}
        \|x_k^{\pi_*}(\xi_1,\{\Delta_s\}_{s=0}^{k-1}) - x_k(\xi_2,\pi_*)\| \leq \beta(\|\xi_1-\xi_2\|,k) \\
        + \gamma(\operatorname{max}_{0 \leq i \leq k-1} \| \Delta_i \|).
    \end{multline}
\end{assumption}

In this setting, a system-theoretic analysis can be sought where the learned policy $\hat{\pi}_p(x_k)$ is viewed as a perturbed version of the ISS expert policy $\pi_*(x_k)$, and the discrepancy $\Delta_k^{\hat{\pi}_p}(\xi,\hat{\pi}_p) = \hat{\pi}_p(x_k(\xi,\hat{\pi}_p)) - \pi_*(x_k(\xi,\hat{\pi}_p))$ acts as the perturbation.
In this case, the imitation gap defined as $\Gamma_T(\xi,\hat{\pi}_p) = \operatorname{max}_{1 \leq k \leq L_d} \| x_k(\xi,\hat{\pi}_p) - x_k(\xi,\pi_*)\|$ is expressed as $\Gamma_T(\xi,\hat{\pi}_p) = \operatorname{max}_{1 \leq k \leq L_d} \| x_k^{\pi_*}(\xi,\{\Delta_s^{\hat{\pi}_p}(\xi,\hat{\pi}_p)\}_{s=0}^{k-1}) - x_k(\xi,\pi_*)\|$.
Under mild regularity assumptions (see \cite{pfrommer2022tasil} for more details), and for sufficiently large number of sampled trajectories, there exists a constant $\rho_{T} \geq 0$ such that the imitation gap evaluated on $\xi \sim \pazocal{D}$ satisfies $\Gamma_T(\xi,\hat{\pi}_p) \leq \rho_{T}$ with high probability.

Since TaSIL aims to reduce policy-induced distribution shift, it performs well in mimicking the expert behavior on the nominal system used to collect the demonstrations.
However, it does not account for uncertainty-induced distribution shifts arising from epistemic and aleatoric uncertainties in the true system. 
Consequently, its performance is only optimized under nominal conditions. 
Furthermore, it does not incorporate safety requirements during training.
In the sequel, we address these limitations by developing a distributionally robust and safe TaSIL framework.

\section{Proposed Approach}

In this section, we develop a distributionally robust and safe TaSIL framework that, 
(I) achieves robustness to uncertainty-induced distribution shifts, 
(II) optimizes performance under distributional uncertainty, and
(III) accounts for safety requirements during training.

\subsection{Robustness}

We employ $\pazocal{L}_1$-DRAC to achieve robustness of the true system~\eqref{eq:ito} in the presence of uncertainties. 
To this end, we define the nominal stochastic system:
\begin{equation}
    \label{eqn:NominalSystem}
    dX^*_{t} 
    =
    \left[ 
    A_\mu X^*_{t} + B_c U^*_{t} 
    \right] dt
    + A_\sigma  dW^*_{t}, 
\end{equation}
where $X^* \in \mathbb{R}^n$ and $U^* \in \mathbb{R}^m$ represent the nominal state and input vectors, respectively, and $W^*_{t} \in \mathbb{R}^{n_w}$ is another Brownian motion defined on $\left(\Omega,\pazocal{F},\mathbb{P}\right)$, and is independent of $W_{t}$. 
Note that the nominal system~\eqref{eqn:NominalSystem} corresponds to the true system~\eqref{eq:ito} with $H_\mu$ and $H_\sigma$ in~\eqref{eq:ito} set to zero. We also define $B_u \in \mathbb{R}^{n \times (n-m)}$ such that $\bar{B}=[B_c \ B_u]$ has full rank.

By setting \( U_t^\star \) as the control input obtained via a zero-order hold of the proposed TaSIL policy \( \hat{\pi}_p^s \) (derived in subsequent sections and inspired by the standard TaSIL policy \( \hat{\pi}_p \)), we consider the input to the true system~\eqref{eq:ito} of the form:
\begin{equation}\label{eq:input}
    \bar{U}_{t} = U^*_{t} + U_{\pazocal{L}_1,t},
\end{equation}
where $U_{\pazocal{L}_1,t} \in \mathbb{R}^m$ is the $\pazocal{L}_1$-DRAC input \cite{gahlawat2025mathcal}. This input is defined as the output of the low pass filter:
\begin{align*}
    U_{\pazocal{L}_1,t}
    =  
    - \omega \int_0^t e^{-\omega(t-\nu)} \hat{\Lambda}_{m,\nu} d\nu,  
\end{align*}
where $\omega \in \mathbb{R}_{>0}$ is the filter bandwidth, $0_{p \times q}$ denotes a zero matrix of size $p \times q$, and $\hat{\Lambda}_{m,t} = \begin{bmatrix}I_m & 0_{m\times (n-m)}  \end{bmatrix} \bar{B}^{-1} \hat{\Lambda}_t$. The variable $\hat{\Lambda}_t \in \mathbb{R}^m$ is obtained via the adaptation law:
\begin{multline*}
    \hat{\Lambda}_t 
    =  
    0_n {1}_{[0,T_s)}
    +
    \textstyle \lambda_s (1 - e^{\lambda_s T_s})^{-1} e^{\lambda_s T_s}
    \sum_{i=1}^{\lfloor \frac{t}{T_s} \rfloor}    
    \tilde{X}_{iT_s}
    {1}_{[iT_s,(i+1)T_s)},
\end{multline*} 
with $\tilde{X}_{iT_s} = \hat{X}_{iT_s} - \bar{X}_{iT_s}$, $\lambda_s \in \mathbb{R}_{>0}$, $T_s \in \mathbb{R}_{>0}$, and $1_{[t_1,t_2)}$ being $1$ in the time interval $[t_1,t_2)$ and 0 otherwise.
For an initial condition $X_0$, $\hat{X}_{t}$ is the solution of the process predictor:
\begin{equation*}
    \hat{X}_{t} 
    =
    X_0
    \textstyle +
     \int_0^t 
        \left(-\lambda_s {I}_n \tilde{X}_{\nu}+ A_\mu \bar{X}_\nu
        +
        B_c (U_\nu^* + U_{\pazocal{L}_1,\nu}) + \hat{\Lambda}_\nu\right) d\nu,
\end{equation*} 
where $\tilde{X}_{t} = \hat{X}_{t} - \bar{X}_{t}$.
We refer to $\{\omega,T_s,\lambda_s\}$ as the  $\pazocal{L}_1$-DRAC parameters, and denote the proposed $\pazocal{L}_1$-DRAC-augmented TaSIL policy by $\hat{\pi}_{p}^{l}$.
Note that the $\pazocal{L}_1$-DRAC augmentation is independent of the choice of the order $p$.

Notice that, given a policy $\pi$ and its corresponding $\pazocal{L}_1$-DRAC-augmented policy $\pi^l$, it is shown in~\cite{gahlawat2025mathcal} that there exists a scalar $\rho \geq 0$ such that
$\mathbb{W}_{2}(\mathbb{X}_t(\xi,\pi^l),\mathbb{X}_t^*(\xi,\pi)) \leq \rho$ for all $t \in [0,T_h]$,
under mild regularity assumptions on the nominal dynamics and the uncertain terms in the true system, and with an appropriate choice of the $\pazocal{L}_1$-DRAC parameters.
Here, $T_h$ denotes the time horizon, $\mathbb{W}_{2}$ is the 2-Wasserstein distance, $\mathbb{X}_{t}(\xi,\pi^l)$ denotes the distribution of the true system~\eqref{eq:ito} under $\pi^l$ starting from $\xi$, and $\mathbb{X}_{t}^*(\xi,\pi)$ denotes the distribution of the nominal system~\eqref{eqn:NominalSystem} under $\pi$ starting from $\xi$.
The specific case in which $\pi$ is the standard TaSIL policy is addressed in~\cite{gahlawat2026distributionally}.
In the interest of space, we refer the reader to~\cite{gahlawat2026distributionally,gahlawat2025mathcal} for further details on the considered assumptions and their implications.

In the following subsections, we show how $\pazocal{L}_1$-DRAC can inspire the offline training of TaSIL, by optimizing performance under distributional uncertainty while incorporating safety requirements.

\subsection{Performance}

We propose a TaSIL formulation that optimizes performance under distributional uncertainty. To this end, we use the Euler-Marumaya scheme to reach a discretized version of the true system~\eqref{eq:ito} given by
\begin{multline}
    \label{eq:true}
    \bar{x}_{k+1} = A \bar{x}_k + B ( \bar{u}_k + H_\mu(\bar{x}_{k}) )
    + ( A_\sigma + B_c H_{\sigma}(\bar{x}_k) ) \Delta w_k, 
\end{multline}
where $\Delta w_k \overset{\text{i.i.d.}}{\sim} \pazocal{N}(0_{n_w},\Delta T\mathbb{I}_{n_w})$, and a discretized version of the nominal system~\eqref{eqn:NominalSystem} given by
\begin{align}
    \label{eq:nominal}
    x_{k+1}^* = A x_k^* + B u_k^* + A_\sigma \Delta w_k^*, 
\end{align}
where $\Delta w_k^* \overset{\text{i.i.d.}}{\sim} \pazocal{N}(0_{n_w},\Delta T\mathbb{I}_{n_w})$.

Assume that there exists a scalar $\rho \geq 0$ such that $\mathbb{W}_{2} (\mathbb{X}_t(\xi,\hat{\pi}_p^l),\mathbb{X}_t^*(\xi,\hat{\pi}_p^s)) \leq \rho \text{ for all } t \in [0,T_h]$.
It should be noted that this bound may not hold exactly for the corresponding discrete-time systems.
As a result, it is necessary to consider the errors introduced by temporal discretization.
As mentioned in \cite{gahlawat2025wasserstein}, one can inflate the constant $\rho$ to account for the discretization error. 
For simplicity, we maintain the constant $\rho$ and assume that the Wasserstein distance between $\mathbb{X}_k(\xi,\hat{\pi}_p^l)$ and $\mathbb{X}_k^*(\xi,\hat{\pi}_p^s)$ satisfies
$ \mathbb{W}_{2} (\mathbb{X}_k(\xi,\hat{\pi}_p^l),\mathbb{X}_k^*(\xi,\hat{\pi}_p^s)) \leq \rho $
where, for an initial condition $\xi$, $\mathbb{X}_k(\xi,\hat{\pi}_p^l)$ and $\mathbb{X}_k^*(\xi,\hat{\pi}_p^s)$ are the distributions of $\bar{x}_k$ under $\hat{\pi}_p^l$ and $x_k^*$ under $\hat{\pi}_p^s$, respectively.

\begin{prop}
    \label{prop}
    Let $f_{cl}(x)=Ax+B\hat{\pi}_p^s(x)$ be Lipschitz continuous with constant $L$.
    Consider the systems in \eqref{eq:simulator} and \eqref{eq:nominal}, both initialized at $\xi$ and evolving under $\hat{\pi}_p^s$. Let $\delta_{x_{k}(\xi,\hat{\pi}_p^s)}$ be the Dirac distribution of the nominal deterministic state in~\eqref{eq:simulator} at time $k$. 
    Then, there exists a constant $\rho_a$ such that $\mathbb{W}_{2} (\mathbb{X}_k^*(\xi,\hat{\pi}_p^s),\delta_{x_{k}(\xi,\hat{\pi}_p^s)}) \leq \rho_a$ for all $k \in \{0,\hdots,\lfloor T_h/\Delta T \rfloor\}$.
\end{prop}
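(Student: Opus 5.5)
The plan is a synchronous coupling of the two systems. Both \eqref{eq:simulator} and \eqref{eq:nominal} start at $\xi$ and are driven by the same policy $\hat{\pi}_p^s$. We can therefore write
\[
x_{k+1}^* = f_{cl}(x_k^*) + A_\sigma \Delta w_k^*, \qquad x_{k+1}(\xi,\hat{\pi}_p^s) = f_{cl}(x_k(\xi,\hat{\pi}_p^s)).
\]
The Wasserstein distance to a Dirac measure has a closed form, since the only coupling with $\delta_y$ is the product coupling:
\[
\mathbb{W}_2(\mu,\delta_y)^2 = \mathbb{E}_{X\sim\mu}\|X-y\|^2 .
\]
It therefore suffices to bound the second moment of the error $e_k = x_k^* - x_k(\xi,\hat{\pi}_p^s)$ uniformly over $k \le N := \lfloor T_h/\Delta T\rfloor$.

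The first step is a one-step error recursion. By the Lipschitz property of $f_{cl}$,
\[
\|e_{k+1}\| \le L\|e_k\| + \|A_\sigma \Delta w_k^*\| ,
\]
with $e_0=0$. Rather than squaring this directly, I take $L^2$ norms $\|Z\|_{L^2} = (\mathbb{E}\|Z\|^2)^{1/2}$ and apply Minkowski's inequality. This gives
\[
\|e_{k+1}\|_{L^2} \le L\|e_k\|_{L^2} + \|A_\sigma \Delta w_k^*\|_{L^2}.
\]
Because $\Delta w_k^*\sim\pazocal{N}(0,\Delta T\, I_{n_w})$, the noise term equals $\sqrt{\Delta T\,\mathrm{tr}(A_\sigma^\top A_\sigma)} = \sqrt{\Delta T}\,\|A_\sigma\|_F$.

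Unrolling this linear recursion (a discrete Grönwall argument) gives
\[
\|e_k\|_{L^2} \le \sqrt{\Delta T}\,\|A_\sigma\|_F \sum_{j=0}^{k-1} L^j .
\]
Finally I take the maximum over $k\le N$ and define
\[
\rho_a := \sqrt{\Delta T}\,\|A_\sigma\|_F \sum_{j=0}^{N-1}L^j ,
\]
which equals $\sqrt{\Delta T}\,\|A_\sigma\|_F\,(L^N-1)/(L-1)$ when $L\neq 1$ and $N\sqrt{\Delta T}\,\|A_\sigma\|_F$ when $L=1$. This constant is finite because the horizon is finite and the Gaussian noise has finite second moment.

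The main subtlety is justifying the use of $L^2$ norms, which requires all the second moments to be finite. Independence between $\Delta w_k^*$ and $e_k$ is not needed for the Minkowski step, which is why I avoid the squared expansion and its cross terms. Finiteness of $\mathbb{E}\|e_k\|^2$ follows by induction: $e_0=0$, and the recursion bounds each $\|e_{k+1}\|_{L^2}$ by finite quantities. A sharper bound is available by expanding $\|e_{k+1}\|^2$ and using independence of $\Delta w_k^*$ from $\pazocal{F}_k$ to kill the cross term, but this is not needed since the proposition only asks for existence of some $\rho_a$.
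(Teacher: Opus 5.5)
Your proof is correct. It shares the paper's skeleton: the error $e_k$, the Lipschitz one-step bound, unrolling from $e_0=0$, and the Dirac closed form $\mathbb{W}_2(\mu,\delta_y)^2=\mathbb{E}_{X\sim\mu}\|X-y\|^2$. The moment step, however, is done differently.

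The paper asserts $\mathbb{E}\|e_{k+1}\|^2\le L^2\,\mathbb{E}\|e_k\|^2+\operatorname{tr}(A_\sigma A_\sigma^\top)\Delta T$ ``since $\Delta w_k^*$ is independent of $e_k$ and has zero mean.'' As written, this squares the scalar inequality $\|e_{k+1}\|\le L\|e_k\|+\|A_\sigma\Delta w_k^*\|$. The cross term then has expectation $2L\,\mathbb{E}\|e_k\|\,\mathbb{E}\|A_\sigma\Delta w_k^*\|$, which does not vanish in general, because $\|A_\sigma\Delta w_k^*\|$ is not zero-mean. The recursion is still true if you expand the vector identity $e_{k+1}=(f_{cl}(x_k^*)-f_{cl}(x_k))+A_\sigma\Delta w_k^*$ before applying the Lipschitz bound. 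The cross term is then $2\,\mathbb{E}[(f_{cl}(x_k^*)-f_{cl}(x_k))^\top A_\sigma\Delta w_k^*]=0$, which is exactly the sharper variant you mention at the end.

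Your Minkowski route avoids this pitfall entirely. It needs only the noise's second moment, not independence from the past. It also gives an explicit constant that bounds $\mathbb{W}_2$ itself. By contrast, the paper's $\rho_a=\max_k\mathbb{E}\|e_k\|^2$ actually bounds $\mathbb{W}_2^2$, so read literally it gives $\mathbb{W}_2\le\sqrt{\rho_a}$.

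What the correctly executed squared route buys is sharpness: variances add rather than standard deviations. This gives $\mathbb{W}_2^2\le\Delta T\,\|A_\sigma\|_F^2\sum_{j<k}L^{2j}$. For $L=1$ the bound grows like $\sqrt{k}$ rather than $k$, and for $L<1$ the uniform bound carries $1/\sqrt{1-L^2}$ in place of $1/(1-L)$. For the existence claim in the proposition, either route suffices.
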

\begin{proof}
    Define the error $e_k=x_k^*-x_k$ at time $k$.
    The error dynamics become $e_{k+1}=f_{cl}(x_k^*)-f_{cl}(x_k)+ A_\sigma \Delta w_k^*$.
    Since $f_{cl}$ is Lipschitz with constant $L$, we have $\|e_{k+1}\| \leq L\|e_k\|+\|A_\sigma\Delta w_k^*\|$. 
    Since $\Delta w_k^*$ is independent of $e_k$ and has zero mean, we reach $\mathbb{E}[\|e_{k+1}\|^2] \leq L^2\mathbb{E}[\|e_k\|^2]+\operatorname{tr}(A_\sigma A_\sigma^\top)\Delta T$.
    With $e_0=0$, this recursion implies that $e_{k}$ is finite for any finite $k$.
    Due to the Dirac distribution $\delta_{x_{k}(\xi,\hat{\pi}_p^s)}$, the 2 Wasserstein distance $\mathbb{W}_{2} (\mathbb{X}_k^*(\xi,\hat{\pi}_p^s),\delta_{x_{k}(\xi,\hat{\pi}_p^s)})^2 = \mathbb{E}[\|e_{k}\|^2] \leq \rho_a$ where $\rho_a = \max_{k \in \{0,\hdots,\lfloor T_h/\Delta T\rfloor\}}\mathbb{E}[\|e_{k}\|^2]$.
\end{proof}

The Lipschitz condition in Proposition~\ref{prop} is mild, as many neural networks are Lipschitz under standard weight and activation assumptions.
By defining $\rho_L = \rho + \rho_a$ and using the triangle inequality, it is deduced that $\mathbb{W}_{2} (\mathbb{X}_k(\xi,\hat{\pi}_p^l),\delta_{x_{k}(\xi,\hat{\pi}_p^s)}) \leq \rho_L$.
To optimize performance in the presence of distributional uncertainty,
we propose the risk minimization problem:
\begin{equation*}
    \hat{\pi}_p^r \in \operatorname{argmin}_{\pi \in \Pi}
    \mathbb{E}_{\xi \sim \pazocal{D}}\left[ l_{p,\pazocal{L}_1}^{\pi_*,r} (\xi, \pi)\right],
\end{equation*}
where the loss function $l_{p,\pazocal{L}_1}^{\pi_*,r} (\xi, \pi)$ is given by
\begin{align}
    \label{eq:Rloss}
    l_{p,\pazocal{L}_1}^{\pi_*,r} (\xi, \pi) 
    &= 
    \sum\nolimits_{j=0}^{p} 
    \max_{0 \leq k \leq L_d-1} 
    \sup_{\nu_k \in \mathbb{B}_{\rho_L}(\delta_{x_k(\xi,\pi_*)})} 
    Y_k^j(\xi), \\
    Y_k^j(\xi)
    &= 
    \nonumber
    \mathbb{E}_{y_k \sim \nu_k} 
    \| 
    \partial_{y_k}^j \pi(y_k) 
    - 
    \partial_{y_k}^j \pi_*(y_k) 
    \|,
\end{align}
and $\mathbb{B}_{\rho_L}\left(\delta_{x_k(\xi,\pi_*)}\right) := \left\{ \mathbb{Q} \ | \ \mathbb{W}_2\left(\mathbb{Q}, \delta_{x_k(\xi,\pi_*)}\right) \le \rho_L \right\}$.
Note that we construct the ball $\mathbb{B}_{\rho_L}$ around the expert trajectory $x_k(\xi,\pi_*)$, since it is known a priori, whereas $x_k(\xi,\hat{\pi}_p^s)$ is not.
The accuracy of this approximation depends on how closely the learned policy matches the expert; in general, $\rho_L$ can be enlarged to account for the approximation error.

The intuition behind \eqref{eq:Rloss} is as follows. 
Instead of aligning the learned policy (and its derivatives) with the expert policy (and its derivatives) only along the expert trajectory, we consider a ball of distributions centered at the Dirac distribution of the expert trajectory. 
Trajectories are then sampled from the worst-case distribution within this ball, and the learned policy (and its derivatives) are aligned along them.
This approach ensures that the TaSIL policy aligns as closely as possible with the expert policy under uncertainty.

The main limitation of this loss function is that the expert policy and its derivatives (up to order $p$) are not available along the sampled trajectories. However, they are available along the expert trajectories, in whose neighborhood the sampled trajectories lie. Thus, we approximate the expert policy locally using a Taylor expansion around the expert trajectory. In particular, for a state $y_k$ in the vicinity of $x_k(\xi,\pi_*)$, we consider the surrogate,
$
\pi_*(y_k) \approx \pi_*(x_k(\xi,\pi_*)) + J_{\pi_*}(x_k(\xi,\pi_*))(y_k - x_k(\xi,\pi_*)) + \cdots,
$
up to order $p$,
where $J_{\pi_*}(x_k^{\pi_*}(\xi))$ denote the Jacobian of the expert policy evaluated at $x_k^{\pi_*}(\xi)$, and is available by assumption.

\begin{theorem}
    If no epistemic or aleatoric uncertainty is present, then the distributionally robust TaSIL formulation in~\eqref{eq:Rloss} coincides with the standard TaSIL formulation in~\eqref{eq:loss}.
\end{theorem}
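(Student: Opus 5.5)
The argument shows that, with no uncertainty, the radius $\rho_L$ of the ambiguity set is zero. Once that holds, the ball $\mathbb{B}_{\rho_L}(\delta_{x_k(\xi,\pi_*)})$ contains only the Dirac measure at the expert state, and the supremum in \eqref{eq:Rloss} reduces to evaluating the integrand there.

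First I would make the hypothesis precise. "No epistemic or aleatoric uncertainty" means $H_\mu\equiv 0$, $H_\sigma\equiv 0$ and $A_\sigma=0$. Under these conditions the true discretized system \eqref{eq:true}, the nominal stochastic system \eqref{eq:nominal} and the nominal deterministic system \eqref{eq:simulator} all coincide.

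Next I would bound the two pieces of $\rho_L=\rho+\rho_a$.

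\textbf{Term $\rho_a$.} In the proof of Proposition~\ref{prop}, the error satisfies $e_0=0$ and $e_{k+1}=f_{cl}(x_k^*)-f_{cl}(x_k)$ once $A_\sigma=0$. By induction $e_k\equiv 0$, so $\mathbb{E}\|e_k\|^2=0$ for all $k$ and hence $\rho_a=0$. Equivalently, the recursion $\mathbb{E}\|e_{k+1}\|^2\le L^2\mathbb{E}\|e_k\|^2+\operatorname{tr}(A_\sigma A_\sigma^\top)\Delta T$ with zero forcing and $e_0=0$ gives zero.

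\textbf{Term $\rho$.} With $H_\mu=H_\sigma=0$ and $A_\sigma=0$, the true and nominal dynamics are identical and deterministic. The $\pazocal{L}_1$-DRAC predictor error $\tilde X$ then stays at zero, so $\hat\Lambda\equiv 0$ and $U_{\pazocal{L}_1}\equiv 0$. Both distributions are therefore the same Dirac measure, and the smallest admissible $\rho$ is $0$.

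This second term is the main obstacle. The paper only asserts that some $\rho$ exists; it does not define $\rho$ as the tightest bound. I would therefore interpret $\rho$ and $\rho_a$ as the tight bounds, i.e.\ the suprema over $k$ of the corresponding Wasserstein distances. This is consistent with how $\rho_a$ is defined in Proposition~\ref{prop}. I would then argue that the bound in \cite{gahlawat2025mathcal} vanishes when the uncertainties vanish.

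With $\rho_L=0$, the final step uses the fact that $\mathbb{W}_2$ is a metric. The condition $\mathbb{W}_2(\mathbb{Q},\delta_{x})\le 0$ forces $\mathbb{Q}=\delta_x$, so $\mathbb{B}_{0}(\delta_{x_k(\xi,\pi_*)})=\{\delta_{x_k(\xi,\pi_*)}\}$. The supremum in \eqref{eq:Rloss} is therefore attained at this single measure, giving
\[
Y_k^j(\xi)=\|\partial_x^j\pi(x_k(\xi,\pi_*))-\partial_x^j\pi_*(x_k(\xi,\pi_*))\|=\|\partial_x^j\Delta_k^{\pi_*}(\xi,\pi)\|.
\]
Summing over $j$ and maximizing over $k$ reproduces $l_p^{\pi_*}(\xi,\pi)$ in \eqref{eq:loss}. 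The two risk minimization problems then share the same objective, up to replacing the expectation over $\xi$ by its empirical average, which is the standard relation between them.

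I would also remark that the Taylor surrogate for $\pi_*$ is exact at $y_k=x_k(\xi,\pi_*)$, so the approximation introduces no discrepancy in this case.
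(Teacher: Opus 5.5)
Your proposal is correct and takes the same route as the paper. With the uncertainties removed, the true and nominal systems coincide, so the ambiguity radius is zero and $\mathbb{B}_{\rho_L}(\delta_{x_k(\xi,\pi_*)})=\{\delta_{x_k(\xi,\pi_*)}\}$; the inner supremum in \eqref{eq:Rloss} then reduces to the TaSIL term in \eqref{eq:loss}. Your decomposition $\rho_L=\rho+\rho_a$, together with your remark that $\rho$ must be read as the tight bound, makes explicit a step the paper skips when it passes directly from $\mathbb{W}_2(\mathbb{X}_k(\xi,\pi_*),\delta_{x_k(\xi,\pi_*)})=0$ to the ball being a singleton.
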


\begin{proof}
    In the absence of epistemic and aleatoric uncertainty, the system in~\eqref{eq:true} boils down to the system in~\eqref{eq:simulator}.
    In this case, the distribution $\mathbb{X}_k(\xi,\pi_*)$ of the true system matches the Dirac distribution $\delta_{x_k(\xi,\pi_*)}$ of the nominal deterministic system.
    Thus, $\mathbb{W}_2(\mathbb{X}_k(\xi,\pi_*),\delta_{x_k(\xi,\pi_*)})=0$, and hence, $\mathbb{B}_{\rho_L}\left(\delta_{x_k(\xi,\pi_*)}\right) = \{ \delta_{x_k(\xi,\pi_*)} \}$.
    To this end, $\nu_k=\delta_{x_k(\xi,\pi_*)}$ in~\eqref{eq:Rloss}, the inner supremum problem reduces to $\|\partial_{x_k(\xi,\pi_*)}^j \pi(x_k(\xi,\pi_*)) 
    - 
    \partial_{x_k(\xi,\pi_*)}^j \pi_*(x_k(\xi,\pi_*))\|$, and the formulations in~\eqref{eq:loss} and \eqref{eq:Rloss} are equivalent.
\end{proof}

Following \cite{sinha2017certifying},
\color{black}
one can replace the inner supremum problem by its relaxed Lagrangian in~\eqref{eq:Rloss} and use a robust surrogate of the resulting problem for training.
In this case, gradient ascent can be used to solve the supremum problem.

\subsection{Safety}

\begin{figure*}
    \label{fig:tracking}
    \centering
    \includegraphics[trim=0pt 0pt 0pt 0pt, clip, width=0.8\textwidth]{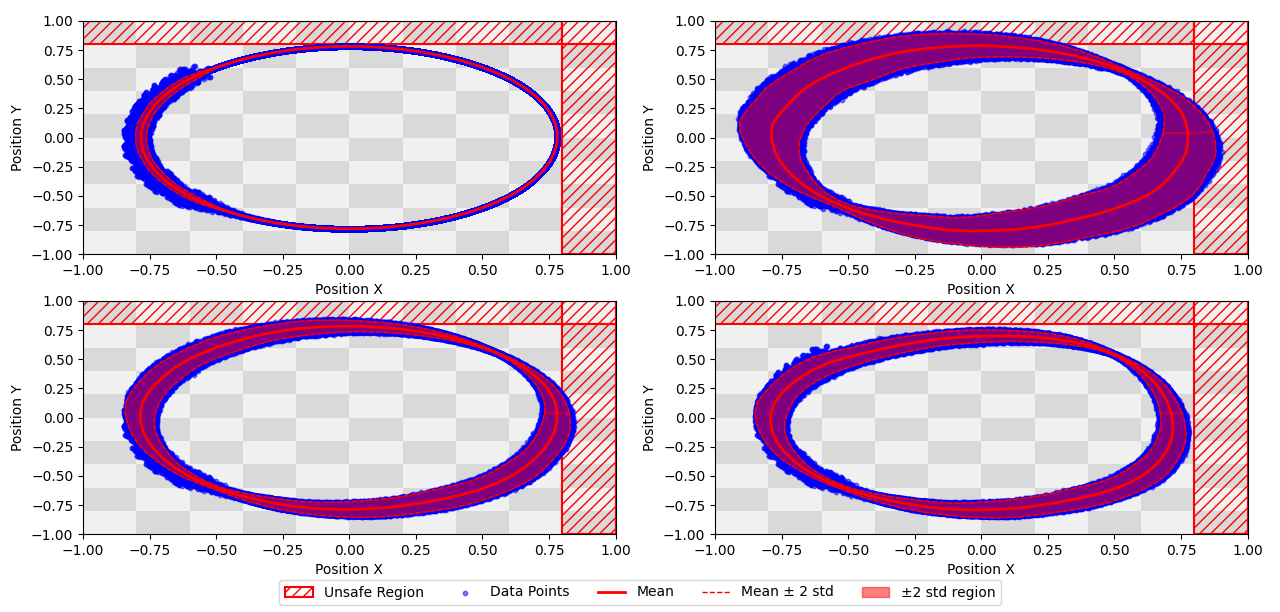}
    \caption{Monte Carlo simulations of the task under different controllers and operating conditions. \textbf{(Top-Left)} Standard TaSIL policy in the absence of uncertainties. \textbf{(Top-Right)} Standard TaSIL policy in the presence of uncertainties. \textbf{(Bottom-Left)} Standard TaSIL policy augmented with $\pazocal{L}_1$-DRAC in the presence of uncertainties. \textbf{(Bottom-Right)} Distributionally robust and safe TaSIL policy augmented with $\pazocal{L}_1$-DRAC in the presence of uncertainties.
    }
    \label{fig:TaSIL}
\end{figure*}

While the expert can account for safety requirements when collecting demonstrations under nominal conditions, TaSIL does not explicitly take these requirements into account. Here, we consider a safe region whose admissible set is
$\pazocal{X} = \{x \in \mathbb{R}^n : c_j^\top x \leq d_j, \ \forall j \in \{1,\hdots,n_c\}\},$
where $n_c$ is the number of constraints with 
$c_j \in \mathbb{R}^{n}$ and $d_j \in \mathbb{R}$ 
for all $j \in \{1,\hdots,n_c\}$.
Starting from an expert data point $x_k(\xi,\pi_*)$, we define the next nominal and true states under a policy $\pi$, respectively, as $z_k(\xi) = A x_k(\xi,\pi_*) + B \pi(x_k(\xi,\pi_*))$ and $\bar{z}_k(\xi) = A x_k(\xi,\pi_*) + B ( \pi(x_k(\xi,\pi_*)) + H_\mu(x_k(\xi,\pi_*)) ) + ( A_\sigma + B_c H_{\sigma}(x_k(\xi,\pi_*)) ) \Delta w_k$ following~\eqref{eq:simulator} and~\eqref{eq:true}.
Note that the $\bar{z}_k(\xi)$ has a distribution $\mathbb{X}_1(x_k(\xi,\pi_*),\pi)$.

Due to the stochastic nature of $\bar{z}_k(\xi)$, we consider the chance constraints,
$\mathbb{P}(\bar{z}_k(\xi) \in \pazocal{X}) \geq 1-\delta_s$, 
where $\delta_s \in (0,1)$ is a user specified risk parameter.
Owing to their convexity and ability to distinguish tail events, we approximate the chance constraints using the Conditional-Value-at-Risk (CVaR) constraints as follows:
\begin{equation}
    \label{eq:cvar}
    \operatorname{CVaR}_{1-\delta_s}^{\bar{z}_k(\xi) \sim \mathbb{X}_1(x_k(\xi,\pi_*),\pi)}\left(\max_{j \in  \{1,\hdots,n_c\}} c_j^\top \bar{z}_k(\xi) - d_j\right) \leq 0.
\end{equation}
The main challenge in \eqref{eq:cvar} is that $\mathbb{X}_1(x_k(\xi,\pi_*),\pi)$ is unknown due to the existing uncertainties.
Note, however, that $\mathbb{W}_{2} (\mathbb{X}_1(x_k(\xi,\pi_*),\pi),\delta_{z_k(\xi)}) \leq \rho_L$ from Proposition~\ref{prop}.
Thus, we can replace~\eqref{eq:cvar} with $Z_k(\xi) \leq 0$ where
\begin{equation}
    \label{eq:cvar_sup}
    Z_k(\xi) = \sup_{\nu_k \in \mathbb{B}_{\rho_L}(\delta_{z_k(\xi)})} \operatorname{CVaR}_{1-\delta_s}^{\bar{z}_k(\xi) \sim \nu_k}\left(\max_{j \in  \{1,\hdots,n_c\}} c_j^\top \bar{z}_k(\xi) - d_j\right).
\end{equation}
Inspired by~\eqref{eq:cvar_sup}, we aim to add a safety component to the TaSIL loss function to promote safety.
To this end, we propose the risk minimization problem:
\begin{equation*}
    \hat{\pi}_p^s \in \operatorname{argmin}_{\pi \in \Pi}
    \mathbb{E}_{\xi \sim \pazocal{D}}\left[ l_{p,\pazocal{L}_1}^{\pi_*,s} (\xi, \pi)\right],
\end{equation*}
where the loss function $l_{p,\pazocal{L}_1}^{\pi_*,s} (\xi, \pi)$ combines the distributionally robust term \eqref{eq:Rloss} with an additional physics-informed component, and is defined as
\begin{align}
    l_{p,\pazocal{L}_1}^{\pi_*,s} (\xi, \pi) 
    =
    l_{p,\pazocal{L}_1}^{\pi_*,r} (\xi, \pi)
    +
    \lambda
    \sum\nolimits_{k=0}^{L_d-1}
    \operatorname{ReLU} \left( Z_k(\xi)
    \right), 
\end{align}
where $\lambda \in \mathbb{R}_+$ and $\operatorname{ReLU}$ denotes the rectified linear unit.

The intuition behind this loss function is as follows. 
When the CVaR constraint is satisfied (i.e., its value is negative), the ReLU term evaluates to zero and the loss reduces to the distributionally robust TaSIL objective, which encourages the learned policy to mimic the expert policy under uncertainty. 
However, if the CVaR constraint is violated, typically because the expert demonstrations were collected under conditions that differ from those at deployment, the ReLU becomes positive and adds a penalty. 
This discourages the policy from blindly imitating the expert and instead guides it to remain as close as possible while respecting the constraint. 
Note that ReLU is only one possible choice for penalization; smooth alternatives include  ELU and GeLU.

Since the safe region is represented using linear constraints, 
$Z_k(\xi) \leq 0$ can be expressed, following \cite{gahlawat2025wasserstein}, as
\begin{equation}
    Z_k^j(\xi) = c_j^\top z_k(\xi) - d_j 
    + \frac{\|c_j\|_2}{\sqrt{\delta_s/n_c}}\rho_L \leq 0, \forall j \in \{1,\hdots,n_c\}.
\end{equation}
In this case, 
\color{black}
the loss function $l_{p,\pazocal{L}_1}^{\pi_*,s} (\xi, \pi)$ is given by
\begin{multline}
    \label{eq:Sloss}
    l_{p,\pazocal{L}_1}^{\pi_*,s} (\xi, \pi) =
    l_{p,\pazocal{L}_1}^{\pi_*,r} (\xi, \pi)
    +
    \lambda
    \sum_{j=1}^{n_c}
    \sum_{k=0}^{L_d-1}
    \operatorname{ReLU} \left(
    Z_k^j(\xi)
    \right).
\end{multline}
While the safety term considers one-step-ahead states initialized from expert data points, it naturally extends to multi-step-ahead predictions. 
Moreover, instead of relying on expert trajectories, one may use the trajectories sampled in~\eqref{eq:Rloss} via gradient ascent.
Finally, the proposed approach employs the bound $\rho_L = \rho + \rho_a$ for learning the TaSIL policy, as in~\eqref{eq:Sloss}. The computation of the $\pazocal{L}_1$-DRAC bound, however, depend on the nominal dynamics under the learned policy, leading to a coupled co-design problem. 
This challenge is further compounded by the requirement of a contractive nominal policy for the computation of the $\pazocal{L}_1$-DRAC bound. While this requirement is discussed in~\cite{gahlawat2026distributionally}, this aspect would benefit from a more thorough analysis.
Tools from the co-design theory \cite{censi2015mathematical} provide an interesting framework to jointly design TaSIL and $\pazocal{L}_1$-DRAC; this direction is left for future work. In the present work, $\rho_L$ is chosen empirically.

\section{Simulation Results}

In this section, we demonstrate the effectiveness of the proposed framework on a UAV case study, in which the goal is to emulate an expert planner 
in an uncertain environment while accounting for safety requirements.

We consider a UAV model
where the state vector is given by $x = [p^\top, v^\top, \eta^\top, \omega^\top]^\top$, 
comprising the position $p = [p_x, p_y, p_z]^\top$, linear velocity $v = [v_x, v_y, v_z]^\top$, Euler angles $\eta = [\phi, \theta, \psi]^\top$, and body angular rates $\omega = [p, q, r]^\top$ of the vehicle. 
The input vector is given by  $u = [u_1, u_2, u_3, u_4]^\top$, 
where $u_1$ denotes the total thrust and $u_2, u_3, u_4$ denote the roll, pitch, and yaw moments, respectively.
We assume a low-level controller, in the form of an LQR law given by $u = -Kx + Ld$, is used to stabilize the UAV, where $d = [x_d, y_d, z_d, \psi_d]^\top$ denotes the desired trajectory for the UAV position $p$ and heading $\psi$. The matrices $K$ and $L$ are control gains of appropriate dimensions.
At time index $k$, the desired trajectory is updated as $d_k = d_{k-1} + \Delta d_{k-1}$.
Our objective is to learn a policy that imitates the expert planner generating the trajectory increments $\Delta d_{k-1}$.

Let $m$, $J_x$, $J_y$, $J_z$, and $g$ denote the UAV's mass, moments of inertia, and gravitational acceleration, respectively. The UAV model can then be written in the form~\eqref{eq:ito} with $X_t \in \mathbb{R}^{12}$ representing the UAV state vector, $U_t \in \mathbb{R}^{4}$ denoting the UAV desired trajectory, $A_\mu = A_q - B_q K$, $B_c = B_q L$, where
$B_q = \left[ \frac{1}{m}e_6 \ \frac{1}{J_x}e_{10} \ \frac{1}{J_y}e_{11} \ \frac{1}{J_z}e_{12} \right]$,
$A_\sigma = \left[ \sigma_v e_4 \ \sigma_v e_5 \ \sigma_v e_6 \ \sigma_\eta e_{10} \ \sigma_\eta e_{11} \ \sigma_\eta e_{12} \right]$, and
\begin{equation*}
    A_q = 
    \begin{bmatrix}
        0_{9 \times 3} & \operatorname{diag}(I_3,A_{23},0,I_3) \\
        0_3 & 0_{3 \times 9}
    \end{bmatrix}\hspace{-0.1cm}, \ 
    A_{23} =
    \begin{bmatrix}
        s_{\psi_0}g & c_{\psi_0}g \\ 
        -c_{\psi_0}g & s_{\psi_0}g
    \end{bmatrix}\hspace{-0.1cm}.
\end{equation*}
The parameter $\psi_0$ denotes the heading around which the UAV operates, 
$e_i \in \mathbb{R}^{12}$ denote unit vectors whose $i$-th element is 1 for all $i \in \{1,\hdots,12\}$, 
$\sigma_v \in \mathbb{R}_+$,
$\sigma_\eta \in \mathbb{R}_+$,
$s_\theta$ and $c_\theta$ denote $\sin(\theta)$ and $\cos(\theta)$, respectively.

In numerical simulations, the model parameters are set as $m = 1\mathrm{kg}$, $g = 9.81\mathrm{m/s^2}$, $J_x = J_y = 0.01\mathrm{kg,m^2}$, $J_z = 0.02\mathrm{kg,m^2}$, and $\psi_0 = 0$, while $\sigma_v = 0.1$ and $\sigma_\eta = 0.1$.
The weighting matrices $Q = \operatorname{diag}(10,10,100,0,0,0,0,0,10,0,0,0)$ and $R = \operatorname{diag}(1,0.1,0.1,0.1)$ are used to compute the LQR gains $K$ and $L$.
The expert planner generates trajectories that drive the UAV to follow a circular horizontal trajectory of radius $0.8$ in a counterclockwise direction, from any initial condition in the vicinity of the trajectory. 
The safe region is defined as $\{(x,y) \in \mathbb{R}^2 \mid x \le 0.8,\; y \le 0.8\}$.

Following the notation in~\eqref{eq:ito}, we consider the epistemic uncertainty $H_\mu(x)$ that affects the UAV through the same channels as $U_t$, i.e., the desired trajectory. This disturbance can equivalently be viewed as disturbing forces and moments $H(x) = L H_\mu(x)$ acting through the same channels as the control input vector $u$.
The epistemic uncertainty $H_{\mu}(x)$ is modeled as a sum of sinusoids with randomly generated amplitudes and frequencies. The frequencies are uniformly sampled from $[3, 4]\mathrm{rad/s}$. The amplitudes in the $d_x$, $d_y$, $d_z$ and $d_\psi$ channels are uniformly sampled from $[0.02,0.025]$, $[0.02,0.025]$, $[0.04,0.05]$, and $[0.04,0.05]$, respectively.
In this case study, we set $H_\sigma(x)$ to zero.

We consider learned policies comprising two hidden layers, each of which has 32 neurons and uses GELU as activation functions. 
In all simulations, we use TaSIL of order $p=1$.
The policy is trained over 1000 iterations using $10$ expert trajectories, each of which has $150$ data points and starts from an initial condition 
sampled from a uniform distribution in the vicinity of the desired trajectory.
The $\pazocal{L}_1$-DRAC parameters are given by $\omega=5$, $T_s=0.01$, and $\lambda=5$.

We analyze the system under the following policies and operating conditions: standard TaSIL $\hat{\pi}_p$ in the absence of uncertainties, standard TaSIL $\hat{\pi}_p$ in the presence of uncertainties, standard TaSIL $\hat{\pi}_p$ augmented with $\pazocal{L}_1$-DRAC in the presence of uncertainties, and distributionally robust and safe TaSIL $\hat{\pi}_p^s$ augmented with $\pazocal{L}_1$-DRAC in the presence of uncertainties.
We simulate~\eqref{eq:ito} using the Euler-Maruyama scheme with a sampling time $\Delta T = 0.01$. The Brownian motion increments are modeled as i.i.d. Gaussian random variables with zero mean and variance $\Delta T$.
The planner operates at a slower time scale, with a $0.1$ sampling time.

Figure \ref{fig:TaSIL} shows Monte Carlo simulations of the system behavior under varying initial conditions and epistemic uncertainties, while performing the desired tasks across the four aforementioned cases. 
The initial conditions are sampled near the left quarter of the trajectory, whereas the epistemic uncertainties are constructed as sums of sinusoids, as described earlier.
As shown in the top-left panel, the standard TaSIL policy successfully mimics the expert behavior, following a circular trajectory of radius $0.8$, in uncertainty-free environments. 
However, in the presence of uncertainty (top-right panel), the policy deviates from the expert. 
With the $\pazocal{L}_1$-DRAC augmentation (bottom-left panel), the policy remains closer to the desired trajectory under uncertainty. 
Notably, in all of the above cases, the UAV fails to avoid the unsafe region.
In contrast, the distributionally robust and safe TaSIL (bottom-right panel) closely follows the expert (e.g., along the bottom-left portion of the trajectory) while intentionally deviating near unsafe regions (e.g., along the top-right portion) to ensure safety.

\section{CONCLUSIONS}

We propose a novel distributionally robust and safe imitation learning framework. 
The developed framework is designed to be robust to both policy-induced and uncertainty-induced distribution shifts, optimize performance under distributional uncertainty, and account for safety requirements during training. 
We validate the proposed framework through extensive simulations of a UAV flying in uncertain environments while avoiding unsafe regions.

\addtolength{\textheight}{-12cm}   








\bibliographystyle{ieeetr}
\bibliography{References}

\end{document}